\let\NAT@parse\undefined
		\newcommand{\inv}{^{-1}}
		\newcommand{\of}{\circ}
		\newcommand{\restr}[1]{\left.#1\right|}
		\newcommand{\card}[1]{\left|#1\right|}
		\newcommand{\set}[1]{\left\{#1\right\}}
		\newcommand{\naturals}{\mathbb{N}}
		\newcommand{\N}{\naturals}
		\newcommand{\reals}{\mathbb{R}}
		\newcommand{\R}{\reals}
		\newcommand{\cut}{\setminus}
		\newcommand{\iprod}[1]{\left<#1\right>}
		\newcommand{\paren}[1]{\left(#1\right)}
		\newcommand{\ptxt}[1]{\textrm{\textnormal{#1}}}
		\newcommand{\mc}[1]{\mathcal{#1}}
		\newcommand{\ms}[1]{\mathscr{#1}}
		\newcommand{\mb}[1]{\mathbb{#1}}
		\newtheorem{theorem}{Theorem}
		\newtheorem{lemma}{Lemma}
		\newtheorem{corollary}{Corollary}
	\LetLtxMacro\orgvdots\vdots
	\LetLtxMacro\orgddots\ddots
	\DeclareRobustCommand\vdots{%
		\mathpalette\@vdots{}%
	}
	\newcommand*{\@vdots}[2]{%
		\sbox0{$#1\cdotp\cdotp\cdotp\m@th$}%
		\sbox2{$#1.\m@th$}%
		\vbox{%
			\dimen@=\wd0 %
			\advance\dimen@ -3\ht2 %
			\kern.5\dimen@
			\dimen@=\wd2 %
			\advance\dimen@ -\ht2 %
			\dimen2=\wd0 %
			\advance\dimen2 -\dimen@
			\vbox to \dimen2{%
				\offinterlineskip
				\copy2 \vfill\copy2 \vfill\copy2 %
			}%
		}%
	}
	\DeclareRobustCommand\ddots{%
		\mathinner{%
			\mathpalette\@ddots{}%
			\mkern\thinmuskip
		}%
	}
	\newcommand*{\@ddots}[2]{%
		\sbox0{$#1\cdotp\cdotp\cdotp\m@th$}%
		\sbox2{$#1.\m@th$}%
		\vbox{%
			\dimen@=\wd0 %
			\advance\dimen@ -3\ht2 %
			\kern.5\dimen@
			\dimen@=\wd2 %
			\advance\dimen@ -\ht2 %
			\dimen2=\wd0 %
			\advance\dimen2 -\dimen@
			\vbox to \dimen2{%
				\offinterlineskip
				\hbox{$#1\mathpunct{.}\m@th$}%
				\vfill
				\hbox{$#1\mathpunct{\kern\wd2}\mathpunct{.}\m@th$}%
				\vfill
				\hbox{$#1\mathpunct{\kern\wd2}\mathpunct{\kern\wd2}\mathpunct{.}\m@th$}%
			}%
		}%
	}
	\tikzset{
	  symbol/.style={
		draw=none,
		every to/.append style={
		  edge node={node [sloped, allow upside down, auto=false]{$#1$}}}
	  }
	}
\Crefname{figure}{Fig.}{Figs.}
\Crefname{equation}{Eq.}{Eqs.}
\Crefname{lemma}{Lemma}{Lemmata}
\Crefname{proposition}{Proposition}{Propositions}
\Crefname{assumption}{Assumption}{Assumptions}
\Crefname{theorem}{Theorem}{Theorems}
\Crefname{section}{Section}{Sections}
\Crefname{subsection}{Subsection}{Subsections}
\Crefname{appendix}{Appendix}{Appendices}
\Crefname{corollary}{Corollary}{Corollaries}
\newcommand{\rmd}{\mathrm{d}}
\DeclareMathOperator{\SO}{SO}
\DeclareMathOperator{\SE}{SE}
\newcommand{\mrm}[1]{\mathrm{#1}}
\newcommand{\free}{\mrm{free}}
\DeclareMathOperator{\Vol}{Vol}
\newcommand{\eps}{\epsilon}
\newcommand{\Prob}{\mb{P}}
\newcommand{\E}{\mb{E}}
\title{\LARGE \bf
Sampling-Based Motion Planning with Discrete Configuration-Space Symmetries
}
\author{
Thomas Cohn$^{1}$, Russ Tedrake$^{1}$
\thanks{
$^{1}$The authors are with the Computer Science and Artificial Intelligence Laboratory (CSAIL), Massachusetts Institute of Technology, 32 Vassar St, Cambridge, MA, 02139 {\tt\small [tcohn|russt]@mit.edu}}%
}
\begin{document}

\maketitle
\thispagestyle{empty}
\pagestyle{empty}

\begin{abstract}%
    When planning motions in a configuration space that has underlying symmetries (e.g. when manipulating one or multiple symmetric objects), the ideal planning algorithm should take advantage of those symmetries to produce shorter trajectories.
However, finite symmetries lead to complicated changes to the underlying topology of configuration space, preventing the use of standard algorithms.
We demonstrate how the key primitives used for sampling-based planning can be efficiently implemented in spaces with finite symmetries.
A rigorous theoretical analysis, building upon a study of the geometry of the configuration space, shows improvements in the sample complexity of several standard algorithms.
Furthermore, a comprehensive slate of experiments demonstrates the practical improvements in both path length and runtime.

\end{abstract}

\section{Introduction}
\label{sec:introduction}
When solving motion planning problems that have underlying symmetries, the ideal planner should leverage this additional structure to find lower cost paths, reduce runtime, or both.
In this paper, we study discrete symmetries, where for each system state, there is a finite set of other states that are functionally indistinguishable.
Such symmetries appear in contact-rich manipulation problems when the object (or objects) being manipulated is a symmetric object, such as a cube~\cite{dafle2014extrinsic,chavan2018hand,pang2023global,suh2025ctr}.
Outside of robotic manipulation, these symmetries appear in assembly/disassembly planning tasks, where paths must be planned for individual (possibly symmetric) parts~\cite{ghandi2015review}.
Finally, when using unit quaternions to represent orientations in 3D space, the equivalence of two antipodal unit quaternions can be interpreted as a form of symmetry.

A common property of all these symmetries is that they appear explicitly in the system's \emph{configuration space}.
In configuration space (or C-space), each coordinate corresponds to an individual degree of freedom, so a single point describes the position and orientation of each part of the robot~\cite{lozano1990spatial}.
For many robots, C-space can be interpreted as a bounded subset of Euclidean space.
(For example, in the case of a robot arm, where each joint has physical limits to its motion.)
But this is not always the case, and the topology of C-space may be inherently non-Euclidean.
For example, the C-space of a revolute joint without limits is the unit circle $S^1$, and the C-space of a mobile robot is the Lie group $\SE(2)$ (or $\SE(3)$ if the robot can move in 3D space).
Finally, closed kinematic chains (or the presence of certain constraints on the motion planning problem) will cause the set of valid configurations to be a curved, measure-zero subset of the full ambient C-space.
In all of these cases, the standard strategy is to represent C-space as a smooth manifold~\cite{berenson2009manipulation,biggs2018motion,kingston2018sampling}. 

A common attribute of the standard types of manifolds considered in robotics is that they have clear descriptions.
Continuous revolute joints can be represented using straightforward switching logic for the wraparound.
There are several well-understood representations available for $\SO(3)$ and $\SE(3)$~\cite{geist2024learning}.
And kinematically-constrained spaces are still a subset of an ambient C-space~\cite{kingston2018sampling}.

However, representing the C-space of an object with symmetries is more challenging.
Mathematically, it is described as a \emph{quotient manifold}, a topological construction that does not yield an explicit representation as a subset of Euclidean space~\cite[p. 540]{lee2012smooth}.
While \emph{continuous} symmetries can be handled via dropping dimensions (e.g., the C-space of a sphere is just its position in $\R^3$), in the case of discrete symmetries, the dimension of the resulting space is unchanged, so this strategy clearly cannot work.

In this paper, we present a strategy for extending sampling-based motion planning algorithms to exploit discrete C-space symmetries.
We show how the key primitive operations needed for sampling-based planning (distance computation, local and global sampling, and local planning) can all be implemented in closed-form for the quotient manifolds of interest.
We demonstrate how this can be viewed through the lens of the IMACS (Implicit Manifold Conﬁguration Space) framework for planning on general manifolds~\cite{kingston2019exploring}, even without an explicit embedding into Euclidean space.
We present a rigorous theoretical analysis of several standard sampling-based planning algorithms when planning in the quotient spaces of interest, quantifying the improvement in bounds on sample complexity.
We also present a comprehensive slate of experiments to demonstrate the efficacy of our strategies for symmetry-aware planning, including path planning in both 2D and 3D for a variety of objects with symmetries.
Across our experiments, leveraging the symmetries yields shorter paths, and despite the more complex approach, actually leads to shorter runtimes.

\section{Related Work}
\label{sec:related_work}
Motion planning is one of the most-studied problems in robotics.
Algorithms for solving this problem can broadly be divided into two categories: \emph{trajectory optimizers} and \emph{sampling-based planners}.
In a trajectory optimization problem, the control points of the robot's trajectory are used as decision variables, and costs and constraints are imposed to shape the desired trajectories~\cite{zucker2013chomp,kalakrishnan2011stomp,toussaint2014komo,howell2019altro}.
These optimization problems are almost always nonconvex, so trajectory optimizers frequently get stuck in local minima, and without a good initialization, may fail to compute a feasible trajectory altogether~\cite[Ch.6]{manipulation}.
To avoid local minima, one must try many different random initializations~\cite{sundaralingam2023curobo} or build complex representations of the collision-free space~\cite{marcucci2023motion}.

Sampling-based planners function by drawing random samples from C-space, rejecting those that are in collision, and connecting samples if the path between them is collision free.
The \emph{Rapidly-Exploring Random Tree} (RRT)~\cite{lavalle1998rapidly} and \emph{Probabilistic Roadmap} (PRM)~\cite{kavraki1996probabilistic} algorithms are the most widely-used sampling-based planning algorithms, although there are many others~\cite{orthey2023sampling}.
RRT incrementally constructs a space-filling tree, rooted at the start configuration, until a path to the goal configuration is found.
PRM builds a roadmap in an offline construction phase, so it only has to link a start and goal configuration into the roadmap to find a path.

The idea of exploiting symmetry to improve planning is not a new idea.
System symmetries can be exploited for model predictive control~\cite{teng2022error}.
In planning-as-search, graph-theoretic symmetries can be exploited for dramatic improvements~\cite{fox2002extending,domshlak2013symmetry}.
Symmetries in certain motion primitives for nonlinear systems can also be leveraged~\cite{pedrosa2021graph,frazzoli2005maneuver}.
Cheng et al.~\cite{cheng2003exploiting} used system symmetries to avoid costly numerical integration steps in kinodynamic planning.
However, there is not much motion planning literature specifically focusing on symmetries in configuration space.
Orthey et al.~\cite{orthey2024multilevel} examined planning on quotient manifolds, but each quotient had to be positive dimensional, and this approach could not handle more topologically-complex quotient spaces.

The symmetries we consider in this paper have finitely many representatives, so a straightforward way to apply existing work is to cast the problem as a continuous planning problem with discrete decisions describing the representative of the planning goal.
This can be interpreted as task-and-motion planning~\cite{garrett2021integrated}, or one can avoid explicitly modeling the decision with a cost function that captures the symmetries.
Scott et al.~\cite{scott2018trajectory} used this strategy for differential dynamic programming, imposing a cost on the final state that was (equally) minimized at any of the goal representatives.

Another possible strategy would be to embed the quotient manifold as a submanifold of Euclidean space, so that one could apply the rich literature studying that problem~\cite{kingston2018sampling,kingston2019exploring,bordalba2022direct}.
The Whitney~\cite[p.134]{lee2012smooth} and Nash~\cite{nash1956imbedding} embedding theorems guarantee the existence of such embeddings, but are nonconstructive.
Recent papers from the mathematical statistics community have constructed explicit embeddings of the quotient manifolds arising from discrete symmetries of $\SO(3)$~\cite{arnold2018statistics,hielscher2021locally}, but the lack of a closed-form inverse mapping makes it unclear how to enforce collision-avoidance.

Perhaps the greatest success for leveraging symmetries has been the world of equivariant machine learning, where symmetry-aware learning algorithms can achieve significant improvements in sample complexity, compared to their symmetry-unaware counterparts.
These sample complexity gains have been leveraged in grasping~\cite{zhu2023robot}, simultaneous localization and mapping~\cite{van2019geometric}, and point cloud registration~\cite{zhu2022correspondence}.
Existing work has also considered the applicability of symmetries in reinforcement and imitation learning for robotics~\cite{zhao2024equivariant}, but modifying these strategies
to handle object-level equivariances is challenging.
Yang et al.~\cite{yang2024equibot} used segmentation to find object-centric point clouds, enabling object-level equivariances, but this approach can only work for a single symmetric object.

\section{Background}
\label{sec:background}
Our approach for leveraging symmetries is based on properties of the geometry of the configuration space, which we study through the lens of group theory and differential geometry.
For more details, we recommend Artin~\cite{artin2010algebra} for group theory and Lee~\cite{lee2012smooth,lee2018introduction} for differential geometry.

\subsection{Group Theory}
\label{sec:background:group_theory}

A \emph{group} is a set $G$ with an associative \emph{group law} $\cdot:G\times G\to G$.
(We often write $gh$ to indicate $g\cdot h$ (for $g,h\in G$), omitting the symbol.)
There must be an identity element $e\in G$ and every $g\in G$ must have an inverse.
We use $|G|$ to denote the cardinality, or \emph{order}, of $G$.
A \emph{subgroup} is a subset of a group that is itself a group.
A \emph{group action} (on a set $X$) is a binary operation $\alpha:G\times X\to X$ that is \emph{compatible} with the group law.
We say that $G$ \emph{acts} on $X$ by $\alpha$, and often write $\alpha(g,x)$ as $g\cdot x$ when the action is unambiguous.
Note that every group acts on itself via the group law.
The \emph{orbit} of $x\in X$ is the set of points reachable via $G$, denoted $G\cdot x=\set{g\cdot x:g\in G}$.

We can use groups to describe the symmetries of a rigid body in 2D or 3D.
In particular, the configuration of an object in 2D is described by the group $\SE(2)$, and the subgroup $\SO(2)\subseteq\SE(2)$ is the set of orientations.
We can describe the object's symmetries as a subgroup of $\SO(2)$ that acts on $\SE(2)$ (respectively, $\SO(3)$ and $\SE(3)$ for an object in 3D).
Under this action, the orbit of a configuration of the object is the set of all other configurations that are the same under the symmetry.

\begin{table}
    \centering
    \begin{tabular}{|>{\centering\arraybackslash}p{2.2cm}|c|>{\centering\arraybackslash}p{4.2cm}|} \hline
        Group Name & Order & Examples of Corresponding Objects\\ \hline
        Cyclic Group of Order $n$ & $n$ & Pyramid with a regular $n$-gon base (abbreviated as $n$-Pyramid) \\ \hline
        Dihedral Group of Order $2n$ & $2n$ & Prism with a regular $n$-gon base (abbreviated as $n$-Prism) \\ \hline
        Alternating Group on 4 Elements & $12$ & Tetrahedron\\ \hline
        Symmetric Group on 4 Elements & $24$ & Cube, Octahedron\\ \hline
        Alternating Group on 5 Elements & $60$ & Dodecahedron, Icosahedron\\ \hline
    \end{tabular}
    \caption{
        Possible 3D symmetries as subgroups of $\SO(3)$.
        \vspace{-\baselineskip}
    }
    \label{tab:so3_subgroups}
\end{table}

The finite subgroups of $\SO(2)$ and $\SO(3)$ have been completely classified~\cite{klein2003lectures}.
In $\SO(2)$, we only need to consider $C_n$ for $n\in\N$, the cyclic group of order $n$.
This corresponds to the symmetries experienced by a regular $n$-gon.
($C_1$ corresponds to an object with no symmetries, and $C_2$ corresponds to a rectangle.)
$\SO(3)$ is more complicated; its subgroups are listed in \Cref{tab:so3_subgroups}.
There are additional common symmetries that are encompassed by our framework.
The set of symmetries experienced by a cylinder is the product of an infinite group (describing rotation about the axis of symmetry), and $C_2$ (corresponding to ``flipping'' the cylinder).
Also, if we use the unit quaternions to represent $\SO(3)$, the fact that $q$ and $-q$ represent the same orientation can be seen as the symmetry group $C_2$.
Finally, this framework can naturally represent the symmetries of a system with multiple symmetric objects.
For example, given two symmetry groups $G_1$ and $G_2$ for two objects, the symmetry group of their joint configuration space is naturally the \emph{product} group $G_1\times G_2$.

\subsection{Symmetry Groups Acting on Manifolds}
\label{sec:background:symmetry_groups}

A manifold is a locally Euclidean topological space.
We interpret a configuration space of interest as a \emph{Riemannian manifold} $(\mc Q, g)$, where $g$, the \emph{Riemannian metric}, is an inner product on the tangent space at each point of the manifold $\iprod{\cdot,\cdot}_g$, and it lets us measure the length of a curve $\gamma:[0,1]\to\mc Q$ via the \emph{length functional}
\begin{equation}
    \ms L(\gamma)=\int_0^1\sqrt{\iprod{\dot\gamma(t),\dot\gamma(t)}_g}\,\rmd t.
    \label{eq:length_functional}
\end{equation}
A curve $\gamma$ which is locally length minimizing is called a \emph{geodesic}, and it is uniquely defined by its initial position and velocity.
Geodesics need not be unique, but every geodesic of length less than the \emph{injectivity radius} $r_\mrm{inj}(\mc Q)$ is globally length-minimizing and the unique shortest path connecting its endpoints.
A function between manifolds that preserves the Riemannian metric is a \emph{local isometry}, and it preserves the length of curves under composition.
Thus, local isometries map geodesics to geodesics.

A \emph{Lie group} $G$ is a smooth manifold that is also a group, where $(g,h)\mapsto g\inv h$ is a smooth function.
(Examples include $\SO(2)$ and $\SO(3)$.)
Suppose we have a Lie group $G$ acting on a manifold $\mc M$.
We can define an equivalence relation by $x\sim y$ if $\exists g\in G$ such that $x=g\cdot y$, so the equivalence classes $[x]$ are the orbits $G\cdot x$, and we have the \emph{canonical projection} $\pi:x\mapsto[x]$.
It is natural to describe the configuration space of an object with symmetries in this way, and if certain properties are satisfied, the space itself will be highly structured.
Notably, all symmetry groups considered in \Cref{sec:background:group_theory} satisfy conditions\footnote{
    Free, proper, properly discontinuous, by isometries, and transitive.
} ensuring several key properties:
$\mc M/G=\set{[x]:x\in\mc M}$ is a smooth manifold with $\dim \mc M/G=\dim\mc M$, and geodesics of $\mc M/G$ can be lifted to geodesics of $\mc M$, so the Riemannian distance function takes the convenient form
\begin{equation}
    \rmd_{\mc M/G}([x],[y])=\inf\set{\rmd_\mc M(\tilde x,\tilde y):\tilde y\in[y]},
    \label{eq:simpler_quotient_distance}
\end{equation}
where $\tilde x\in[x]$ is an arbitrary fixed representative.
Also, with multiple symmetric objects, we can compute the distance function on a per-object basis.
So the cost to compute the distance grows linearly with the number of objects, even though the size of the symmetry group grows exponentially.

\section{Methodology}
\label{sec:methodology}
We begin by describing our strategy for planning in the quotient configuration spaces of interest, before undertaking a theoretical analysis of the performance of several common sampling-based planners within this framework.

\subsection{Sampling-Based Planning}
\label{sec:methodology:sampling_based_planning}

Consider a configuration space $\mc Q$ with collision-free subset $\mc Q_\free$.
(We also require $\mc Q_\free$ be Lebesgue-measurable.)
Given initial and goal configurations $q_0,q_1\in\mc Q_\free$, the motion planning problem is
\begin{equation}
    \begin{array}{rl}
        \operatorname{find} & \gamma:[0,1]\to\mc Q\\
        \ptxt{such that} & \gamma(0)=q_0,\;\gamma(1)=q_1,\\
        & \gamma(t)\in\mc Q_\free,\;\forall t\in[0,1].
    \end{array}
    \label{eq:regular_motion_planning_problem}
\end{equation}
There are a plethora of sampling-based planning algorithms that try to solve a variant of this problem.
For example, a \emph{multi-query} planner will use offline compute time to build up a model, which can be used to quickly compute paths between many distinct start/goal pairs.
And while \eqref{eq:regular_motion_planning_problem} simply asks for a satisficing plan, \emph{asymptotically optimal} planners will attempt to find a path of minimal length.

The IMACS framework~\cite{kingston2019exploring} demonstrated that many existing sampling-based planning algorithms on manifolds can be abstracted as higher-level approaches that only interact with the underlying geometry via a set of common primitives.
\begin{enumerate}
    \item Distance metric: a function $\rmd:\mc Q\times\mc Q\to\R_{\ge 0}$ that describes the proximity of different configurations.
    \item Global sampler: a procedure for drawing samples on all of $\mc Q$ which must almost-surely sample any positive-volume subset of $\mc Q$. (A uniform sampler is ideal.)
    \item Local sampler: a procedure for drawing samples from a ball around a given point.
    \item Local planner: a (usually deterministic) planner which must try to produce a feasible plan between two nearby configurations.
    Note that the local planner need not be \emph{complete}; it can return infeasible even if a path exists.
\end{enumerate}

In our problem scenario, we additionally have a symmetry group $G$ acting on $\mc Q$.
We intend that the symmetry group describes configurations which are ``indistinguishable''.
For the pure path planning problem that we consider in this paper, this just means the distinct configurations must be geometrically indistinguishable: $\forall g\hspace{-0.1em}\in\hspace{-0.1em} G,q\hspace{-0.1em}\in\hspace{-0.1em}\mc Q$, $q\hspace{-0.1em}\in\hspace{-0.1em}\mc Q_\free\Leftrightarrow g\hspace{-0.1em}\cdot\hspace{-0.1em} q\hspace{-0.1em}\in\hspace{-0.1em}\mc Q_\free$.
If dynamics were involved, then the various dynamics properties (e.g. friction coefficients and moments of inertia) would also have to match across the symmetry.
We can adapt \eqref{eq:regular_motion_planning_problem}, and our new motion planning problem is
\begin{equation}
    \hspace{-10pt}\begin{array}{rl}
        \operatorname{find} & \gamma:[0,1]\to\mc Q/G\\
        \ptxt{such that} & \gamma(0)=[q_0],\;\gamma(1)=[q_1],\\
        & q_t\in\mc Q_\free,\;\forall q_t\in\pi\inv(\gamma(t)),\;\forall t\in[0,1].
    \end{array}
    \label{eq:quotient_motion_planning_problem}
\end{equation}
Note that the choice of $q_t\in\pi\inv(\gamma(t))$ is irrelevant, since we have assumed configurations are indistinguishable under the symmetry.
In a slight abuse of notation, we let $\mc Q_\free/G$ denote the subset of $\mc Q/G$ which is collision free.

Conceptually speaking, motion planning algorithms that solve \eqref{eq:regular_motion_planning_problem} can solve \eqref{eq:quotient_motion_planning_problem} by building an undirected graph in $\mc Q/G$, using closed-form solutions of the IMACS primitives.
\begin{enumerate}
    \item Distance metric: for $[q_0],[q_1]\in\mc Q/G$, fix $q_0\in[q_0]$ and find the $q_1\in[q_1]$ minimizing $\rmd_\mc Q(q_0,q_1)$.
    That value is $\rmd_{\mc Q/G}([q_0],[q_1])$.
    ($\rmd_\mc Q$ is known in closed-form for $\SO(2)$ and $\SO(3)$.)

    \item Global sampler: because $\pi:\mc Q\to\mc Q/G$ is measure-preserving, we can obtain uniform global samples in $\mc Q/G$ by uniformly sampling $\mc Q$ and projecting.
    
    \item Local sampler: we can draw samples from the ball of radius $r$ about $q$, $B_r([q])$, by choosing any $q'\in[q]$, sampling $B_r(q')\subseteq\mc Q$, and projecting.
    
    \item Local planner: when computing the distance between $[q_0]$ and $[q_1]$, we get a minimizing geodesic $\gamma$ connecting $q_0$ and $q_1$ in $\mc Q$.
    (E.g., using linear interpolation for Euclidean components, or spherical linear interpolation for $\SO(3)$~\cite{shoemake1985animating}.)
    If $\gamma$ is collision-free, then the local planner outputs $\pi\of\gamma$, which connects $[q_0]$ and $[q_1]$.
\end{enumerate}

Practically speaking, the algorithm will produce a \emph{directed} graph $\mc G=(\mc V,\mc E)$.
Each vertex has a corresponding $q_v\in\mc Q$, the \emph{canonical representative} of the equivalence class $[q_v]$.
Note that for each edge in the graph $e=(u,v)$, the minimizing geodesic connecting $q_u$ to $q_v$ in $\mc Q$ may not project to the shortest path between $[q_u]$ and $[q_v]$ in $\mc Q/G$.
So we associate to $e$ the \emph{end point} $q_e\in[q_v]$, chosen such that the minimizing geodesic connecting $q_u$ to $q_e$ projects to the shortest geodesic between $[q_u]$ and $[q_v]$.
This demonstrates why we must internally use a directed graph, as the reverse edge has a different end point.

Once the sampling-based planning algorithm returns a path in the graph, we lift it to a continuous trajectory in $\mc Q$.
A vertex path in the graph $(v_i)_{i=0}^N$ admits an edge sequence $(e_i)_{i=1}^N$ with $e_i=(v_{i-1},v_i)$.
We cannot directly concatenate the edges, as for a given $e_i$, we may have $q_{e_i}\ne q_{v_i}$.
So for each vertex, we find the $g\in G$ such that $q_{e_i}=g\cdot q_{v_i}$, and then append $g\cdot q_{e_{i+1}}$.
Thus, we obtain a sequence of points $q_0',\ldots,q_N'\in\mc Q$ such that the piecewise minimizing geodesic connecting these points has the same length as the path in $\mc Q/G$ described by the edge path in the graph.

\subsection{Analysis of Sampling-Based Planning Algorithms}
\label{sec:methodology:analysis_of_sampling_based_planning_algorithms}

In this section, we extend several standard bounds on the performance of common sampling-based motion algorithms, simultaneously verifying probabilistic completeness and quantifying the improvement that results from leveraging symmetries.
We emphasize that in each case, the original bound is recovered when $\card{G}=1$, and note that our results here are only improving on bounds, so they do not guarantee that the algorithms will take fewer samples.
We prove:
\begin{enumerate}
    \item a constant-factor improvement of $\card{G}$ on the exponent in an existing bound on the probability that an RRT has not found a solution after $k$ iterations,
    \item a multiplicative improvement of $1/\card{G}$ on the the expected number of samples needed for a PRM to find a path, and
    \item the connection radii needed for RRT* and PRM* to be asymptotically optimal drops by a factor of $\card{G}^{1/(d+1)}$ and $\card{G}^{1/d}$ (respectively), where $d$ is the dimension of the configuration space.
\end{enumerate}

We begin by stating some readily-apparent geometric properties about the configuration space.
(Proofs of these results are deferred to the Appendix.)
\begin{lemma}
    $\Vol(\mc Q/G)=\Vol(\mc Q)/\card{G}$.
    \phantomsection
    \label{lem:volume}
\end{lemma}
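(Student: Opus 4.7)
The plan is to exploit the fact that, under the hypotheses stated in \Cref{sec:background:symmetry_groups}, the canonical projection $\pi:\mc Q\to\mc Q/G$ is a Riemannian covering map of degree $\card{G}$, hence a local isometry. Since local isometries preserve the Riemannian volume form, the result reduces to a standard argument about fundamental domains.

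Concretely, I would proceed in three steps. First, I would invoke the fact that a free, properly discontinuous action by isometries admits a measurable fundamental domain $D\subseteq\mc Q$, meaning $\mc Q=\bigcup_{g\in G}g\cdot D$ with the translates $\{g\cdot D:g\in G\}$ pairwise disjoint up to a set of measure zero. (A Dirichlet domain constructed from any basepoint works, or one may take a measurable section of the covering map.) Second, because each $g\in G$ acts by isometries, it is a volume-preserving diffeomorphism, so $\Vol(g\cdot D)=\Vol(D)$ for every $g\in G$, and therefore
\begin{equation}
    \Vol(\mc Q)=\sum_{g\in G}\Vol(g\cdot D)=\card{G}\cdot\Vol(D).
    \label{eq:vol_fund_domain}
\end{equation}
Third, since $\pi$ is a local isometry and $\pi|_{\Int D}$ is a bijection onto its image, which differs from $\mc Q/G$ by a measure-zero set, the change-of-variables formula gives $\Vol(D)=\Vol(\mc Q/G)$. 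Combining this with \eqref{eq:vol_fund_domain} yields $\Vol(\mc Q/G)=\Vol(\mc Q)/\card{G}$.

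The main obstacle is purely measure-theoretic bookkeeping: one must verify that the boundary of the fundamental domain has measure zero (so that the overlap between translates does not contribute) and that $\pi|_{\Int D}$ is injective onto a full-measure subset of $\mc Q/G$. Both facts follow from properness plus the free action (overlaps occur only on lower-dimensional strata), and neither requires more than standard results about Riemannian coverings, so I expect the argument to be quite short in the Appendix.
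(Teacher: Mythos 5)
Your argument is correct, but it takes a genuinely different route from the paper. You decompose the \emph{total space}: you take a measurable fundamental domain $D\subseteq\mc Q$, write $\mc Q$ as the union of the $\card{G}$ translates $g\cdot D$ (disjoint up to measure zero), use the fact that each $g$ acts by isometries to get $\Vol(\mc Q)=\card{G}\Vol(D)$, and then identify $\Vol(D)$ with $\Vol(\mc Q/G)$ via $\pi$. The paper instead decomposes the \emph{quotient}: it covers $\mc Q/G$ by images $\pi(U_q)$ of neighborhoods over which the action separates points, extracts a finite subcover by compactness, disjointifies it into sets $V_i$, and uses the fact that each $\pi\inv(V_i)$ consists of $\card{G}$ disjoint sheets, each mapped isometrically by $\pi$, so $\Vol(\pi\inv(V_i))=\card{G}\Vol(V_i)$; summing gives the result. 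The trade-off is roughly this: the paper's partition argument needs compactness of $\mc Q/G$ (to get the finite subcover) but sidesteps the construction of a fundamental domain entirely, whereas your argument does not need compactness but does require you to actually produce a measurable fundamental domain and verify that its boundary (e.g., the equidistant hypersurfaces bounding a Dirichlet domain) has measure zero and that $\pi$ restricted to its interior hits a full-measure subset of the quotient --- the ``measure-theoretic bookkeeping'' you correctly flag. Both verifications are standard for free, properly discontinuous isometric actions, so either proof would serve; yours is the more classical covering-space argument, the paper's is slightly more self-contained given the compact configuration spaces it actually works with.
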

\begin{corollary}
    $\Vol(\mc Q_\free/G)=\Vol(\mc Q_\free)/\card{G}$.
    \label{cor:free_volume}
\end{corollary}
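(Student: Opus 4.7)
The plan is to reduce this immediately to \Cref{lem:volume} by exploiting the $G$-invariance of $\mc Q_\free$, which was already imposed in \Cref{sec:methodology:sampling_based_planning} (namely, $q\in\mc Q_\free\Leftrightarrow g\cdot q\in\mc Q_\free$ for every $g\in G$). The key observation is that the canonical projection $\pi:\mc Q\to\mc Q/G$ is a Riemannian covering map of degree $\card G$, hence locally volume-preserving, and for any $G$-invariant measurable set $A\subseteq\mc Q$ one has $\pi\inv(\pi(A))=A$.

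First I would state (or extract from the proof of \Cref{lem:volume}) the following measure-theoretic fact: there exists a measurable fundamental domain $F\subseteq\mc Q$ such that the translates $\set{g\cdot F:g\in G}$ partition $\mc Q$ up to a set of measure zero, and $\pi|_F$ is a measure-preserving bijection onto $\mc Q/G$ (modulo a null set). This is essentially the content that drives \Cref{lem:volume}, since summing volumes across the $\card G$ translates of $F$ gives $\Vol(\mc Q)=\card G\cdot\Vol(F)=\card G\cdot\Vol(\mc Q/G)$.

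Next I would apply this fundamental domain to $\mc Q_\free$ directly. Because $\mc Q_\free$ is $G$-invariant, its decomposition under the action is
\begin{equation*}
    \mc Q_\free=\bigsqcup_{g\in G}g\cdot(F\cap\mc Q_\free)
\end{equation*}
up to measure zero, with every translate having the same volume (the action is by isometries, hence volume-preserving). Therefore $\Vol(\mc Q_\free)=\card G\cdot\Vol(F\cap\mc Q_\free)$. On the quotient side, $F\cap\mc Q_\free$ is a fundamental domain for the $G$-action restricted to $\mc Q_\free$, and $\pi|_{F\cap\mc Q_\free}$ is a measure-preserving bijection onto $\mc Q_\free/G$ (modulo null sets), so $\Vol(\mc Q_\free/G)=\Vol(F\cap\mc Q_\free)$. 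Combining these two equalities yields the claim.

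The only potential obstacle is that $\mc Q_\free$ is merely assumed to be Lebesgue-measurable, not open or a submanifold, so one might worry that the fundamental-domain machinery of \Cref{lem:volume} does not apply. However, the argument above uses only measurability and $G$-invariance of $\mc Q_\free$, together with the fact that isometric group actions are measure-preserving and that $\pi$ is a Riemannian covering; none of these require $\mc Q_\free$ to inherit a manifold structure. Thus the corollary follows with essentially no additional work beyond intersecting the fundamental domain of \Cref{lem:volume} with $\mc Q_\free$.
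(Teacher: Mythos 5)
Your proposal is correct and follows essentially the same route as the paper: both reduce the corollary to the machinery of \Cref{lem:volume} by invoking the $G$-invariance of $\mc Q_\free$, the paper by intersecting each quotient piece $V_i$ with $\mc Q_\free/G$ downstairs, and you by intersecting a fundamental domain with $\mc Q_\free$ upstairs --- two phrasings of the same decomposition, since your $F$ is just a union of single sheets over the $V_i$. Your explicit remark that only measurability and $G$-invariance of $\mc Q_\free$ are needed (not openness or a manifold structure) is a point the paper leaves implicit.
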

\begin{lemma}
    \textbf{(Cheeger and Ebin)}
    Fix $\eps<r_\mrm{inj}(\mc Q/G)$ and $q\in\mc Q$. Then $\restr{\pi}_{B_\eps(q)}$ is a diffeomorphism onto its image~\cite{cheeger1975comparison}.\vspace{-\baselineskip}
    \label{lem:balls_to_balls}
\end{lemma}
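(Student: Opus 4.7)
The plan is to exploit the two standard facts that are built into the preamble: (i) the action of $G$ is free, proper, properly discontinuous, and by isometries, which makes $\pi:\mc Q\to\mc Q/G$ a smooth Riemannian covering map (in particular a local isometry, hence a local diffeomorphism); and (ii) geodesics of length less than $r_\mrm{inj}(\mc Q/G)$ are unique in the base. Together these reduce the claim to showing that $\restr{\pi}_{B_\eps(q)}$ is \emph{injective}, since surjectivity onto its image is automatic and smoothness plus the local-diffeomorphism property then upgrade a bijection to a diffeomorphism.

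For injectivity, I would argue by lifting. Suppose $x,y\in B_\eps(q)$ satisfy $\pi(x)=\pi(y)$. Since $\eps<r_\mrm{inj}(\mc Q/G)\le r_\mrm{inj}(\mc Q)$ (the latter inequality coming from $\pi$ being a local isometry and a covering), there exist unique minimizing geodesics $\gamma_x,\gamma_y:[0,1]\to\mc Q$ from $q$ to $x$ and $y$, each of length strictly less than $\eps$. Their projections $\pi\circ\gamma_x$ and $\pi\circ\gamma_y$ are geodesics in $\mc Q/G$ starting at $[q]$, ending at the common point $\pi(x)=\pi(y)$, and of length less than $\eps<r_\mrm{inj}(\mc Q/G)$. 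Uniqueness of short geodesics in the base forces $\pi\circ\gamma_x=\pi\circ\gamma_y$.

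Finally I would invoke the unique path-lifting property of the covering map $\pi$: two lifts of the same path that share an initial point must coincide. Since $\gamma_x$ and $\gamma_y$ both lift the same base geodesic and both start at $q$, we conclude $\gamma_x=\gamma_y$, and in particular $x=\gamma_x(1)=\gamma_y(1)=y$. This gives injectivity on $B_\eps(q)$ and completes the proof.

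The only real subtlety, and the step I would expect to be the main obstacle to write cleanly, is justifying that $B_\eps(q)\subseteq\mc Q$ is itself a genuine geodesic ball in which minimizing geodesics from $q$ exist and are unique — i.e., controlling $r_\mrm{inj}(\mc Q)$ from below by $r_\mrm{inj}(\mc Q/G)$. This follows because $\pi$ is a local isometry and the exponential maps commute under $\pi$, so any short geodesic loop or conjugate point in $\mc Q$ at $q$ would descend to one at $[q]$ of the same length; given that this is precisely the content of the Cheeger–Ebin result being cited, in the write-up I would either sketch this exponential-map argument or simply defer to~\cite{cheeger1975comparison}.
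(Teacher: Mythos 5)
Your proof is correct, but it is a genuinely different route from the paper's. The paper does not prove \Cref{lem:balls_to_balls} from scratch: its ``proof'' is a one-sentence remark that Lemma 1.32 of Cheeger--Ebin gives the statement for an arbitrarily small neighborhood, and that the normal-coordinate argument used there remains bijective on any ball of radius below the injectivity radius. You instead give a self-contained covering-space argument: reduce to injectivity of $\restr{\pi}_{B_\eps(q)}$, connect $q$ to two points $x,y$ with $\pi(x)=\pi(y)$ by minimizing geodesics, observe that both projections are geodesics from $[q]$ to $[x]$ of length below $r_\mrm{inj}(\mc Q/G)$ and hence coincide, and conclude $x=y$ by unique path lifting. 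This is a valid and arguably more transparent proof, and it buys independence from the cited reference. One remark: the step you flag as the main obstacle --- establishing $r_\mrm{inj}(\mc Q)\ge r_\mrm{inj}(\mc Q/G)$ so that the upstairs geodesics $\gamma_x,\gamma_y$ are \emph{unique} --- is not actually needed. You only need \emph{existence} of some minimizing geodesic from $q$ to each point of $B_\eps(q)$ (Hopf--Rinow, since the configuration spaces in question are complete); all the uniqueness your argument consumes lives downstairs in $\mc Q/G$, where it is supplied directly by the hypothesis $\eps<r_\mrm{inj}(\mc Q/G)$, and unique path lifting then identifies the chosen lifts. Dropping that step removes the only part of your write-up that would require a nontrivial auxiliary argument (geodesic loops and conjugate points descending under $\pi$, plus Klingenberg's characterization of the injectivity radius).
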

\Cref{lem:volume} reveals how much the volume of the configuration space shrinks when leveraging symmetries, and \Cref{lem:balls_to_balls} gives a notion of how large a scale we can look at before the global topology plays a role.
Together, they yield:
\begin{corollary}
    Fix $q\in\mc Q$, $\eps<r_\mrm{inj}(\mc Q/G)$.
    If $q'$ is uniformly sampled from $\mc Q$, then the probability $\Prob[q'\in B_\eps(q)]=\Prob[[q']\in B_\eps([q])]/\card{G}$.
    \label{cor:ball_probability}
\end{corollary}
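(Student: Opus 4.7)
The plan is to compute both probabilities as ratios of volumes and then apply \Cref{lem:volume} together with the fact that the canonical projection $\pi$ is a local isometry. Concretely, since $q'$ is uniformly sampled on $\mc Q$ and $\pi$ pushes the uniform distribution on $\mc Q$ forward to the uniform distribution on $\mc Q/G$, we have
\begin{equation}
    \Prob[q'\in B_\eps(q)]=\frac{\Vol(B_\eps(q))}{\Vol(\mc Q)},\qquad \Prob[[q']\in B_\eps([q])]=\frac{\Vol(B_\eps([q]))}{\Vol(\mc Q/G)}.
    \nonumber
\end{equation}
By \Cref{lem:volume}, $\Vol(\mc Q/G)=\Vol(\mc Q)/\card{G}$, so the claim reduces to showing that $\Vol(B_\eps(q))=\Vol(B_\eps([q]))$.

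To establish this volume equality, I would first show the set equality $\pi(B_\eps(q))=B_\eps([q])$. The forward inclusion is immediate from \eqref{eq:simpler_quotient_distance}, since $\rmd_{\mc Q/G}([q'],[q])\le\rmd_\mc Q(q',q)<\eps$ for any $q'\in B_\eps(q)$. For the reverse inclusion, given $[q']\in B_\eps([q])$, the formula \eqref{eq:simpler_quotient_distance} together with $\eps<r_\mrm{inj}(\mc Q/G)$ lets me pick a representative $\tilde q'\in[q']$ with $\rmd_\mc Q(\tilde q',q)<\eps$, so $\tilde q'\in B_\eps(q)$ and $\pi(\tilde q')=[q']$.

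With the set equality in hand, I would invoke \Cref{lem:balls_to_balls}: $\restr{\pi}_{B_\eps(q)}$ is a diffeomorphism onto $B_\eps([q])$. Because the $G$-action is by isometries, $\pi$ is a Riemannian local isometry, and a local isometry that is also a diffeomorphism is a genuine isometry on that domain, and hence volume-preserving. This gives $\Vol(B_\eps(q))=\Vol(B_\eps([q]))$, which combined with the first paragraph yields the desired identity.

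The main obstacle is the reverse set inclusion $B_\eps([q])\subseteq\pi(B_\eps(q))$: a priori the infimum in \eqref{eq:simpler_quotient_distance} need not be attained, so I have to use the hypothesis $\eps<r_\mrm{inj}(\mc Q/G)$ to guarantee that any quotient-ball point admits a representative within distance $\eps$ of the fixed lift $q$. Once this lifting step is justified, the remainder is a straightforward bookkeeping calculation, and no facts beyond \Cref{lem:volume}, \Cref{lem:balls_to_balls}, and the isometry properties of the quotient action are needed.
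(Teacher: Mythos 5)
Your proposal is correct and follows essentially the same route the paper intends: \Cref{lem:volume} handles the ratio of total volumes, and \Cref{lem:balls_to_balls} (plus the fact that $\pi$ is a local isometry) shows the ball maps volume-preservingly onto $B_\eps([q])$, which is exactly the ``together, they yield'' argument. One minor remark: since $G$ is finite, each orbit is a finite set, so the infimum in \eqref{eq:simpler_quotient_distance} is automatically attained and your worry about the reverse inclusion is resolved even more directly than by appealing to the injectivity radius.
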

Computing the injectivity radius of a manifold is nontrivial, but we can find a lower bound on $r_\mrm{inj}(\mc Q/G)$.
\begin{theorem}
    $r_\mrm{inj}(\mc Q/G)\ge r_\mrm{inj}(\mc Q)/\card{G}$.
    \label{thm:injectivity_radius}
\end{theorem}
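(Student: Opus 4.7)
The plan is to first reduce the claim to a geometric inequality about the $G$-action on $\mc Q$, and then to establish that inequality by a fixed-point argument. Using $\exp_{[q]} = \pi \of \exp_q$ together with (an elaboration of) \Cref{lem:balls_to_balls}, one can show
\begin{equation*}
    r_\mrm{inj}(\mc Q/G, [q]) \ge \min\set{r_\mrm{inj}(\mc Q, q),\ \tfrac{1}{2}\inf_{g \in G \cut \set{e}} \dist_{\mc Q}(q, g \cdot q)};
\end{equation*}
the idea is that whenever $r < r_\mrm{inj}(\mc Q, q)$ and $\dist_\mc Q(q, g \cdot q) \ge 2r$ for every $g \neq e$, the ball $B_r(q)$ is embedded in $\mc Q$ and contains no two $G$-equivalent points, so $\exp_{[q]}$ is a diffeomorphism on the radius-$r$ ball at $[q]$. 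Since $r_\mrm{inj}(\mc Q, q) \ge r_\mrm{inj}(\mc Q)$, proving the lemma reduces to showing $\dist_{\mc Q}(q, g \cdot q) \ge 2\, r_\mrm{inj}(\mc Q)/\card{G}$ for every $q \in \mc Q$ and $g \in G \cut \set{e}$.

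I would prove this per element, aiming for the stronger bound $\dist_\mc Q(q, g \cdot q) \ge 2\, r_\mrm{inj}(\mc Q)/k$, where $k$ is the order of $g$ in $G$ (so $k$ divides $\card{G}$). Suppose for contradiction that $\eps := \dist_\mc Q(q, g \cdot q) < 2\, r_\mrm{inj}(\mc Q)/k$. By the triangle inequality and the isometry property of each $g^i$, every orbit point $g^i \cdot q$ lies within distance $\floor{k/2}\,\eps < r_\mrm{inj}(\mc Q)$ of $q$, so the $\ang{g}$-orbit through $q$ sits inside a single normal neighborhood of $q$.

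The crux is then to extract a fixed point of $g$, contradicting the free action. I would apply the Riemannian (Karcher) center of mass: on a sufficiently small geodesically convex ball, the minimizer of $p \mapsto \sum_{i=0}^{k-1} \dist_\mc Q(p, g^i \cdot q)^2$ exists and is unique. Because $g$ permutes the orbit by an isometry, this cost function is $g$-invariant, so its unique minimizer must be fixed by $g$. The main obstacle is making the convexity requirement precise: the orbit lies in a normal neighborhood, but uniqueness of the Karcher mean genuinely requires a geodesically convex ball, which in a general Riemannian manifold calls for a sharper, curvature-dependent radius bound. If that gap prevents reaching the stated constant, a fallback is to replace the Karcher-mean step with a direct midpoint argument in the spirit of Synge's theorem; the involution case $k = 2$ already illustrates this cleanly, since uniqueness of the minimizing geodesic of length $\eps < r_\mrm{inj}(\mc Q)$ from $q$ to $g \cdot q$ forces its midpoint to be a fixed point of $g$, contradicting freeness.
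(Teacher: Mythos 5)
Your reduction is sound: the local bound $r_\mrm{inj}(\mc Q/G,[q])\ge\min\set{r_\mrm{inj}(\mc Q,q),\tfrac12\inf_{g\ne e}\rmd_{\mc Q}(q,g\cdot q)}$ is the standard injectivity-radius formula for quotients by free, properly discontinuous isometric actions, and it correctly isolates the crux of the theorem, namely the orbit-separation inequality $\rmd_{\mc Q}(q,g\cdot q)\ge 2r_\mrm{inj}(\mc Q)/\card{G}$. This inequality is precisely the paper's \Cref{lem:same_orbit}; the paper arrives at the same reduction by lifting two short geodesics of $\mc Q/G$ to geodesics of $\mc Q$ with a common starting point and applying the triangle inequality, which is your formula unwound.

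The genuine gap is in your proof of that orbit-separation inequality, and it is the one you flag yourself. Uniqueness of the Karcher mean requires the orbit to sit inside a ball within the \emph{convexity} radius, which depends on an upper sectional curvature bound and is not controlled by the injectivity radius alone. With only $r_\mrm{inj}(\mc Q)$ in hand, the minimizer of $p\mapsto\sum_i\rmd_{\mc Q}(p,g^i\cdot q)^2$ need not be unique; without uniqueness, $g$ merely permutes the set of minimizers and no fixed point is extracted. Since the theorem's constant is stated purely in terms of $r_\mrm{inj}(\mc Q)$, an argument that needs a curvature-dependent radius cannot deliver it, and your Synge-style midpoint fallback only handles $k=2$. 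The paper closes this step without any convexity: assuming $\rmd_{\mc Q}(q,g\cdot q)<2r_\mrm{inj}(\mc Q)/\card{G}$, it translates the minimizing geodesic from $q$ to $g\cdot q$ by successive powers of $g$ (each an isometry, so each translate is again a minimizing geodesic of the same length) and concatenates them; since $g^{\card{G}}=e$, this produces a geodesic loop of length $\card{G}\cdot\rmd_{\mc Q}(q,g\cdot q)<2r_\mrm{inj}(\mc Q)$, contradicting Klingenberg's bound that any geodesic loop has length at least $2r_\mrm{inj}(\mc Q)$. Substituting this loop argument for your fixed-point step repairs the proof, and running it with $g^k=e$ in place of $g^{\card{G}}=e$ even recovers the sharper per-element constant $2r_\mrm{inj}(\mc Q)/k$ you were aiming for.
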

This bound is tight for cyclic groups in $\SO(2)$ and $\SO(3)$.
Because the injectivity radius is known for $\SO(2)$, $\SO(3)$, and the unit $d$-sphere, we can bound $r_\mrm{inj}(\mc Q/G)$ for all configuration spaces of interest.

Now, we turn our attention to the notion of \emph{clearance}.
A path $\gamma:[0,1]\to\mc Q$ is \emph{$\delta$-clear} if $\forall t\in[0,1]$, $B_\delta(\gamma(t))\subseteq\mc Q_\free$.
We leverage the following corollary of \Cref{lem:balls_to_balls}.
\begin{corollary}
    If $\gamma:[0,1]\to\mc Q$ is $\delta$-clear with $\delta<r_\mrm{inj}(\mc Q/G)$, then $\pi\of\gamma$ is $\delta$-clear in $\mc Q/G$.
    \label{cor:clearance}
\end{corollary}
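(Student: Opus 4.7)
The plan is to show that, under the hypothesis $\delta<r_\mrm{inj}(\mc Q/G)$, the canonical projection $\pi$ maps the open ball $B_\delta(q)\subseteq\mc Q$ bijectively and isometrically onto the open ball $B_\delta([q])\subseteq\mc Q/G$ for $q=\gamma(t)$, and then to transport the collision-free inclusion through $\pi$ using the $G$-invariance of $\mc Q_\free$.

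First, I fix an arbitrary $t\in[0,1]$ and set $q=\gamma(t)$; the goal reduces to showing $B_\delta([q])\subseteq\mc Q_\free/G$. By \Cref{lem:balls_to_balls} and the hypothesis $\delta<r_\mrm{inj}(\mc Q/G)$, the restriction $\restr{\pi}_{B_\delta(q)}$ is a diffeomorphism onto its image. Since $G$ acts by isometries, $\pi$ is a local isometry, so this restriction is actually distance-preserving, yielding the inclusion $\pi(B_\delta(q))\subseteq B_\delta([q])$. For the reverse inclusion, given any $[p]\in B_\delta([q])$, \eqref{eq:simpler_quotient_distance} states that $\inf_{\tilde p\in[p]}\rmd_\mc Q(q,\tilde p)<\delta$; because $G$ is finite, the orbit $[p]$ is finite and the infimum is attained by some $\tilde p\in[p]\cap B_\delta(q)$, giving $[p]=\pi(\tilde p)\in\pi(B_\delta(q))$. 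Hence $\pi(B_\delta(q))=B_\delta([q])$.

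Now the conclusion follows at once: by the $\delta$-clearness of $\gamma$, $B_\delta(q)\subseteq\mc Q_\free$, and by the $G$-invariance of $\mc Q_\free$ assumed in \Cref{sec:methodology:sampling_based_planning}, $\pi(\mc Q_\free)=\mc Q_\free/G$. Therefore
\begin{equation*}
B_\delta([q])=\pi(B_\delta(q))\subseteq\pi(\mc Q_\free)=\mc Q_\free/G.
\end{equation*}
As $t\in[0,1]$ was arbitrary, $\pi\of\gamma$ is $\delta$-clear in $\mc Q/G$.

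The argument is essentially immediate once one has \Cref{lem:balls_to_balls} and \eqref{eq:simpler_quotient_distance}; the only subtle step is verifying that $\pi(B_\delta(q))$ is all of $B_\delta([q])$ rather than a proper subset, and this is exactly where the injectivity radius hypothesis enters (without it, distinct points in $B_\delta(q)$ could be identified by $\pi$, or the orbit of a nearby point could have its closest representative fall outside $B_\delta(q)$). Finiteness of $G$ makes the attainment of the infimum in \eqref{eq:simpler_quotient_distance} automatic, so no additional compactness argument is needed.
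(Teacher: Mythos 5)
Your proof is correct and follows essentially the route the paper intends: the corollary is derived from \Cref{lem:balls_to_balls} together with \eqref{eq:simpler_quotient_distance}, using the $G$-invariance of $\mc Q_\free$ to push the clearance ball through $\pi$. (One cosmetic note: the claim that $\restr{\pi}_{B_\delta(q)}$ is ``distance-preserving'' is slightly stronger than needed or justified, but your conclusion only uses that $\pi$ is distance-nonincreasing plus the attained infimum in \eqref{eq:simpler_quotient_distance}, both of which are sound.)
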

The upper bound on $\delta$ is to ensure that the radius of the ball is preserved by $\pi$.
This demonstrates that our bounds are only applicable when the clearance of the trajectory is on a smaller scale than the topological changes induced by the symmetries.
If this is not the case, the obstacles must be sparse, suggesting an easy underlying planning problem.

For the following results, suppose we are planning from $q_0$ to $q_1$, which are connected by a $\delta$-clear path $\gamma:[0,1]\to\mc Q$.
Suppose further that $\delta<r_\mrm{inj}(\mc Q/G)$, so $\pi\of\gamma$ is a $\delta$-clear path connecting $[q_0]$ to $[q_1]$.
Finally, let $\ell=\ms L(\gamma)=\ms L(\pi\of\gamma)$.

\subsubsection{RRT Sample Complexity} Kleinbort et al.~\cite{kleinbort2018probabilistic} presented a new proof of the probabilistic completeness of the RRT algorithm by bounding the failure probability for a given number of samples.
Let $m=\frac{5\ell}{\nu}$, $\nu=\min\set{\delta,\eta}$, where $\eta$ is the step size used by the RRT planner, and $p$ is the probability that a uniform sample from $\mc Q$ falls into a ball of radius $\nu/5$.
In the symmetry aware case, \Cref{cor:ball_probability} yields a new probability $p'=p\card{G}$, but the $\delta$-clearance and path length are unchanged, so we obtain the bound
\begin{equation}
    \Prob[\ptxt{Not Reached in $k$ Iterations}]\hspace{-0.2em}\le\hspace{-0.2em}\frac{1}{(m\hspace{-0.1em}-\hspace{-0.1em}1)!}k^mme^{\hspace{-0.1em}-\hspace{-0.1em}\card{G}pk}\hspace{-0.1em}.\hspace{-0.1em}
    \label{eq:new_rrt_sample_bound}
\end{equation}

\subsubsection{PRM Sampling Complexity}
Ladd and Kavraki~\cite{ladd2004measure} derived a new bound on the expected number of samples for a PRM to find a path.
Let $N$ be the number of samples that are drawn when a PRM repeatedly samples until it finds a path from $q_0$ to $q_1$.
$H(n)$ is the $n$th Harmonic number and $\Vol(B_{\delta/2}(\cdot))$ is the volume of the ball of radius $\delta/2$.
In the symmetry-aware case, $\ell$, $\delta$, and $\Vol(B_{\delta/2}(\cdot))$ are unchanged.
However, $\Vol(\mc Q_\free)$ decreases by a factor of $\card{G}$ due to \Cref{cor:free_volume}, yielding the bound
\begin{equation}
    \E[N]\le\frac{H\paren{\frac{2\ell}{\delta}}\Vol(\mc Q_\free)}{\card{G}\Vol(B_{\delta/2}(\cdot))}.
    \label{eq:new_prm_sample_bound}
\end{equation}

\begin{figure}
	\centering
	\includegraphics[height=6cm]{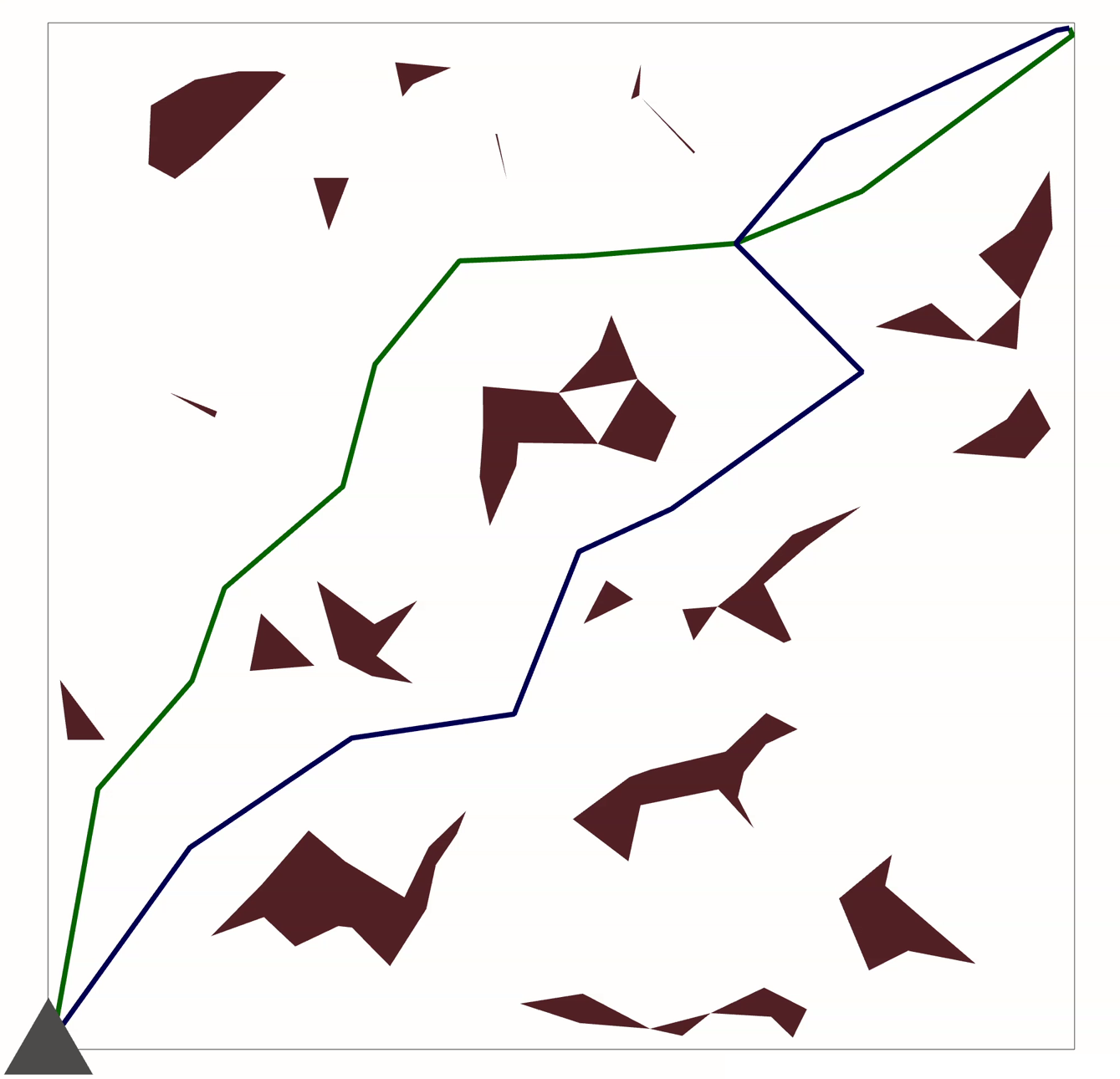}
	\caption{
		A randomly-generated world in 2D. The green and blue paths are symmetry-aware and -unaware paths for the triangle (respectively), found using KNN-PRM*.
		\vspace{-\baselineskip}
	}
	\label{fig:experimental_setup_2d}
\end{figure}

\subsubsection{PRM* Connection Radius}
When Karaman and Frazzoli~\cite{karaman2011sampling} introduced PRM*, they also bounded the minimum connection radius scaling parameter $\rho$ necessary to achieve asymptotic optimality.
The only geometric construction necessary is the sequence of covering balls~\cite[C.3]{karaman2011sampling}, and the radius of each ball is bounded above by the clearance of the path (which, in turn, is smaller than the injectivity radius).
Thus, for the symmetry-aware Radius-PRM* planner to be asymptotically optimal, the connection radius must scale by
\begin{equation}
    \rho_\mrm{prm}>2(1+1/d)^{1/d}\paren{\frac{\Vol(\mc Q_\free)}{\Vol(B_1(\cdot))}}^{1/d}\card{G}^{-1/d}.
    \label{eq:new_prm_star_radius}
\end{equation}

\subsubsection{RRT* Connection Radius}
For the connection radius of RRT*, we build upon the bound of Solovey et al.~\cite{solovey2020revisiting}, which was needed to close a logical gap in Karaman and Frazzoli's original proof of asymptotic optimality.
Once again, the only geometric construction necessary is a sequence of covering balls of small radius, so no values in the bound change except the volume of the whole space.
Let $c^*$ be the length of the shortest path connecting $q_0$ to $q_1$, $\theta\in(0,\frac{1}{4})$, and $\mu\in(0,1)$.
Then for the symmetry-aware RRT* planner, the probability of a solution with cost at most $(1+\epsilon)c^*$ for $\epsilon\in(0,1)$ approaches $1$ as the number of samples increases if
\begin{equation}
    \rho_\mrm{rrt}\ge(2+\theta)\paren{\frac{(1+\epsilon/4)c^*\Vol(\mc Q_\free)}{(d+1)\theta(1-\mu)\Vol(B_1(\cdot))}}^{\frac{1}{d+1}}\card{G}^{\frac{-1}{d+1}}.
    \label{eq:new_rrt_star_radius}
\end{equation}

\section{Experiments}
\label{sec:experiments}
\begin{figure}
	\centering
	\includegraphics[height=6cm]{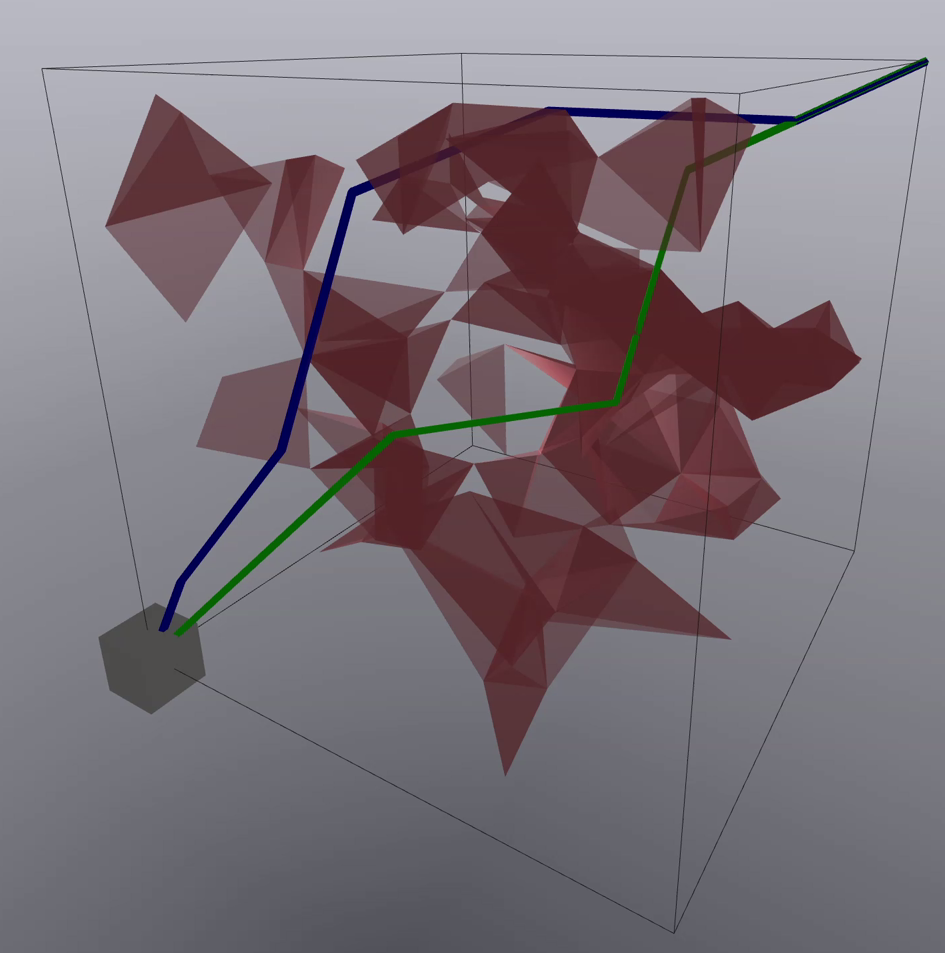}
	\caption{
		A randomly-generated world in 3D. The green and blue paths are symmetry-aware and -unaware paths for the cube (respectively), found using RRT*.
		\vspace{-\baselineskip}
	}
	\label{fig:experimental_setup_3d}
\end{figure}

\begin{table*}
	\centering
	\begin{tabular}{|>{\centering\arraybackslash}p{1.25cm}|>{\centering\arraybackslash}p{1.25cm}|>{\centering\arraybackslash}p{1.1cm}|>{\centering\arraybackslash}p{1.55cm}|>{\centering\arraybackslash}p{1.55cm}|>{\centering\arraybackslash}p{1.9cm}|>{\centering\arraybackslash}p{1.55cm}|>{\centering\arraybackslash}p{1.9cm}|>{\centering\arraybackslash}p{1.7cm}|} \hline
		Workspace Dimension & Object & Symmetry Group Order & RRT Online Runtime Improvement & RRT Path Length Improvement & RRT* Online Runtime Improvement & RRT* Path Length Improvement & KNN-PRM* Offline Runtime Improvement & KNN-PRM* Path Length Improvement\\ \hline
		2 & Triangle & 3 & 2.766x & 1.216x & 0.828x & 1.133x & 0.832x & 1.069x\\ \hline
		2 & Pentagon & 5 & 2.432x & 1.187x & 0.718x & 1.141x & 0.614x & 1.107x\\ \hline
		2 & Octagon & 8 & 2.192x & 1.227x & 0.611x & 1.157x & 0.367x & 1.116x\\ \hline
		3 & 8-Pyramid & 8 & 2.016x & 1.179x & 0.844x & 1.091x & 0.347x & 1.157x\\ \hline
		3 & 6-Prism & 12 & 2.951x & 1.279x & 0.779x & 1.146x & 0.233x & 1.190x\\ \hline
		3 & Tetrahedron & 12 & 2.360x & 1.310x & 0.751x & 1.163x & 0.227x & 1.193x\\ \hline
		3 & Cube & 24 & 2.759x & 1.364x & 0.749x & 1.198x & 0.100x & 1.196x\\ \hline
	\end{tabular}
	\caption{
		Performance of symmetry-aware algorithms relative to symmetry-unaware baselines given equal resources. %
		For each planner, we show the relative speedup and the relative path length improvement. Higher numbers represent greater speedups and shorter paths, with 1.0 indicating equal performance.
		Due to space constraints, additional experiments and complete results (including Radius-PRM*) are available online at \href{https://cohnt.github.io/projects/symmetries.html}{https://cohnt.github.io/projects/symmetries.html}.
	}
	\label{tab:same_resources}
\end{table*}

\begin{table*}
	\centering
	\begin{tabular}{|>{\centering\arraybackslash}p{1.25cm}|>{\centering\arraybackslash}p{1.25cm}|>{\centering\arraybackslash}p{1.1cm}|>{\centering\arraybackslash}p{1.55cm}|>{\centering\arraybackslash}p{1.55cm}|>{\centering\arraybackslash}p{1.9cm}|>{\centering\arraybackslash}p{1.55cm}|>{\centering\arraybackslash}p{1.9cm}|>{\centering\arraybackslash}p{1.7cm}|} \hline
		Workspace Dimension & Object & Symmetry Group Order & RRT* Online Runtime Improvement & RRT* Path Length Improvement & KNN-PRM* Offline Runtime Improvement & KNN-PRM* Path Length Improvement & Radius-PRM* Offline Runtime Improvement & Radius-PRM* Path Length Improvement\\ \hline
		2 & Triangle & 3 & 0.845x & 1.133x & 3.005x & 0.992x & 2.519x & 1.003x\\ \hline
		2 & Pentagon & 5 & 0.744x & 1.141x & 5.103x & 1.002x & 7.414x & 1.090x\\ \hline
		2 & Octagon & 8 & 0.640x & 1.157x & 6.570x & 1.031x & 15.225x & 1.086x\\ \hline
		3 & 8-Pyramid & 8 & 0.845x & 1.091x & 7.057x & 1.058x & 15.776x & 0.903x\\ \hline
		3 & 6-Prism & 12 & 0.777x & 1.146x & 9.849x & 1.092x & 20.577x & 0.989x\\ \hline
		3 & Tetrahedron & 12 & 0.751x & 1.163x & 10.276x & 1.138x & 20.501x & 1.032x\\ \hline
		3 & Cube & 24 & 0.749x & 1.198x & 16.216x & 1.125x & 32.496x & 1.029x\\ \hline
	\end{tabular}
	\caption{
		Relative performance of symmetry-aware algorithms when given reduced resources than the symmetry-unaware baselines.
		Metrics are the same as in \Cref{tab:same_resources}.
		(RRT is omitted as no strategy for budgeting its available resources is presented.)
		\vspace{-2\baselineskip}
	}
	\label{tab:uneven_resources}
\end{table*}

To numerically evaluate our approach and verify the practical applicability of the sample complexity improvements, we present a comprehensive suite of experiments.
We consider four sampling-based planning algorithms: RRT, RRT*, KNN-PRM*, and Radius-PRM*.
We compare each algorithm with a symmetry-unaware baseline, that simply plans between arbitrary representatives of the start and goal configurations.
Unless stated otherwise, we always set connection radius and neighborhood size to the minimum values necessary to guarantee asymptotic optimality.
Because the performance of these algorithms varies greatly, we focus on running a large number of experiments across different environments and different start/goal pairs.

We generate random worlds by uniformly sampling points from either 2D or 3D within prescribed limits, and computing an alpha shape~\cite{edelsbrunner1983shape}.
Example setups are shown in~\Cref{fig:experimental_setup_2d,fig:experimental_setup_3d}.
Drake~\cite{tedrake2019drake} is used for collision checking.
When comparing a symmetry-aware planner with its corresponding baseline, we use 10 random worlds, with 100 random start/goal pairs, and compare the results in terms of success rate, runtime, and path length.
Additional details (more objects, standard deviations, and success rates) are available online at \href{https://cohnt.github.io/projects/symmetries.html}{https://cohnt.github.io/projects/symmetries.html}.

In \Cref{tab:same_resources}, we consider what happens when the symmetry-aware and -unaware planners are given the same computational resources.
The RRT and RRT* planners are capped at 1000 samples in 2D and 250 samples in 3D, and we use the length of the path found by the symmetry-unaware RRT as an overapproximation of $c^*$ in \eqref{eq:new_rrt_star_radius} when determining the RRT* radius.
The PRM* planners use $3000\card{G}$ samples in 2D and $500\card{G}$ samples in 3D.
We omitted planning failures from these averages, as they are generally caused by the configuration space being disconnected, and the failure rates of the symmetry-aware and -unaware planners were similar.

The symmetry-aware RRT planner has a strong performance across the board, consistently producing paths that are 20-40\% shorter than those from the symmetry-unaware planner, in less than half the time.
RRT* and KNN-PRM* also find shorter paths, but generally require more time, due to the increased computational complexity of computing distances.
Radius-PRM* scales particularly poorly, as the shrinkage of the configuration space leads to more possible edges within the given radius.

In \Cref{tab:uneven_resources}, we give the symmetry-aware planners reduced computational resources, according to the results in \Cref{sec:methodology:analysis_of_sampling_based_planning_algorithms}.
For Radius-PRM* and RRT*, we shrink the connection radius by a factor of $1/\card{G}^{1/d}$ and $1/\card{G}^{1/(d+1)}$ due to \eqref{eq:new_prm_star_radius} and \eqref{eq:new_rrt_star_radius}.
The connection radius for RRT* is further reduced since the shorter path found by the symmetry-aware planner is a tighter upper bound for $c^*$.
For KNN-PRM* and Radius-PRM*, we reduce the number of samples by a factor of $\card{G}$ due to \eqref{eq:new_prm_sample_bound}.
No reduction on the neighborhood size $k$ is possible for KNN-PRM*, as $k$ is only dimension-dependent~\cite[\S 4.2]{karaman2011sampling}.
RRT* does not get runtime improvements, likely because the smaller connection radius does not outweigh the decrease in volume of the configuration space (which leads to samples being closer together).

Finally, to study the scaling with respect to dimension, we consider planning problems with multiple objects with symmetries, for which we must plan paths jointly.
We use a bidirectional RRT (BiRRT)~\cite{kuffner2000rrt} (capped at 40000 samples).
In \Cref{tab:dimension_scaling}, we show the relative improvement in the runtime of the symmetry-aware methodologies, as the dimension of C-space increases.
Although the number of samples required still grows exponentially with the dimension, the constant factor reduction in volume of every component still helps the symmetry-aware planners to scale to higher dimensions than the symmetry-unaware baselines.
(As discussed in \Cref{sec:background:symmetry_groups}, the complexity of computing distances grows linearly with the number of copies, even as the order of the symmetry group grows exponentially.)

\section{Discussion}
\label{sec:discussion}
In this paper, we have presented a general strategy for adapting sampling-based planning algorithms to handle discrete configuration-space symmetries.
We demonstrate that the geometry of such spaces is amenable to efficient planning through the lens of the IMACS framework, as all necessary primitives can be written in closed-form.
Rigorous theoretical results demonstrate improvements in sample-complexity, and comprehensive experimental results verify the theory leads to dramatically improved runtimes for the popular RRT, KNN-PRM*, and Radius-PRM* planning algorithms.

One direction for future work is studying the performance of other sampling-based planners with symmetries.
Our analysis is mostly based on properties of the quotient space, so these other approaches should see similar improvements, even in the kinodynamic setting.

Another direction for future work is extending our results to trajectory optimization, as there are a variety of robotics tasks (especially in contact-rich manipulation) where such methods have proved more performant than sampling-based planning.
Hybrid methods that leverage sampling and trajectory optimization together~\cite{pang2023global,suh2025ctr} could naturally be generalized to handle symmetries purely at the roadmap level.
And contact-rich planning methods built on Graphs of Convex Sets~\cite{graesdal2024towards,chia2024gcs} are suited to modeling symmetries, given the ease with which the symmetry can be reduced to just another discrete decision.

\begin{table}
    \centering
    \setlength{\tabcolsep}{4pt} %
    \begin{tabular}{|>{\centering\arraybackslash}p{0.9cm}|>{\centering\arraybackslash}p{1.45cm}|>{\centering\arraybackslash}p{1.1cm}|>{\centering\arraybackslash}p{1.7cm}|>{\centering\arraybackslash}p{1.57cm}|} \hline
        Number of Objects & Configuration Space Dimension & Symmetry Group Order & BiRRT Online Runtime Improvement & BiRRT Path Length Improvement\\ \hline
        1 & 3 & 2 & 1.121x & 1.064x\\ \hline
        2 & 6 & 4 & 1.350x & 1.099x\\ \hline
        3 & 9 & 8 & 1.824x & 1.131x\\ \hline
        4 & 12 & 16 & 2.573x & 1.137x\\ \hline
        5 & 15 & 32 & 2.747x & 1.101x\\ \hline
    \end{tabular}
    \caption{
        Performance of symmetry-aware BiRRT as the configuration space grows in dimension, with multiple copies of a rectangle. The configuration space is $\SE(2)^m$, the the symmetry group is $(C_2)^m$, where $m$ is the number of copies.
    }
    \label{tab:dimension_scaling}
\end{table}

\section{Acknowledgements}
\label{sec:acknowledgements}
The authors thank Seiji Shaw and Adam Wei (Massachusetts Institute of Technology), HJ Terry Suh (CarbonSix), Tao Pang (Robotics and AI Institute), Zachary Kingston (Purdue University), and Laura Weidensager (Chemnitz University of Technology).
The authors acknowledge the MIT SuperCloud and Lincoln Laboratory Supercomputing Center for providing HPC resources that have contributed to the research results reported within this paper.
This work was supported by Amazon.com, PO No. 2D-06310236 and the National Science Foundation Graduate Research Fellowship Program under Grant No. 2141064.
Any opinions, findings, and conclusions or recommendations expressed in this material are those of the author(s) and do not necessarily reflect the views of the National Science Foundation.

\bibliographystyle{IEEEtran}
\bibliography{ref.bib}

\appendices
\renewcommand{\thesubsectiondis}{\Alph{section}.\arabic{subsection}}
\renewcommand{\thesubsection}{\Alph{section}.\arabic{subsection}}

{
\crefalias{section}{appendix}
\crefalias{subsection}{appendix}

\appendix{}
\label{appx:proofs}

\begin{proof}[Proof of \Cref{lem:volume}]
    The action of $G$ on $\mc Q$ is properly discontinuous, so $\forall q\in\mc Q$, there is an open neighborhood $U_q$ of $q$ such that $\forall g\in G\cut\set{e}$, $g(U_q)\cap U_q=\emptyset$.
    $\set{\pi(U_q):q\in\mc Q}$ is an open cover of $\mc Q/G$, so by compactness, there is a finite subcover $\set{\pi(U_{q_i})}_{i=1}^n$.
    Now, define
    \begin{equation*}
        V_i=\pi\paren{U_{q_i}}\cut\bigcup_{j=1}^{i-1}\pi\paren{U_{q_j}}.
    \end{equation*}
    By construction $V_i\cap V_j=\emptyset$ for $i\ne j$, and $\bigcup_{i=1}^nV_i=\mc Q/G$, so $\bigcup_{i=1}^n\pi\inv(V_i)=\mc Q$.
    The preimage of each $V_i$ will be $\card{G}$ disjoint copies, and $\pi$ is a local isometry, so $\Vol(\pi\inv(V_i))=\card{G}\Vol(V_i)$.
    We conclude that
    \begin{multline*}
        \Vol(\mc Q/G)=\sum_{i=1}^{n}\Vol(V_i)=\frac{1}{\card{G}}\sum_{i=1}^{n}\Vol(\pi\inv(V_i))\\
        =\frac{1}{\card{G}}\Vol(\mc Q).\;\tag*{\qedhere}
    \end{multline*}
\end{proof}

\begin{proof}[Proof of \Cref{cor:free_volume}]
    We follow the proof of \Cref{lem:volume} above, except we intersect each $V_i$ with $\mc Q_\free/G$.
\end{proof}

\begin{proof}[A note on the proof of \Cref{lem:balls_to_balls}]
    Lemma 1.32 of \cite{cheeger1975comparison} states the result for an arbitrarily-small open neighborhood, but the proof uses a normal coordinate system, which will be bijective on any ball within the injectivity radius.
\end{proof}

\subsection*{Proof of \texorpdfstring{\Cref{thm:injectivity_radius}}{Theorem 1}}
\label{proof:injectivity_radius}

The proof of the bound on the injectivity radius is more involved.
First, we show that points are evenly-spaced along one-parameter subgroups of $\mc Q$. Fix $x\in\mc Q$ and $g\in G\cut\set{e}$.
\begin{lemma}
    $\rmd_\mc Q(x,g\cdot x)=\rmd_\mc Q(g\cdot x, g^2\cdot x)$.
\end{lemma}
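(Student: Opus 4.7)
The plan is to prove this as a direct one-line consequence of the fact that $G$ acts on $\mc Q$ by isometries, which is one of the hypotheses on the group action stated in the footnote of \Cref{sec:background:symmetry_groups}. Since the Riemannian distance $\rmd_\mc Q$ is determined by the Riemannian metric, and each $g \in G$ induces a diffeomorphism $\mc Q \to \mc Q$ that preserves the metric tensor, it also preserves distances: for all $y_1, y_2 \in \mc Q$,
\begin{equation*}
    \rmd_\mc Q(g \cdot y_1,\, g \cdot y_2) = \rmd_\mc Q(y_1, y_2).
\end{equation*}

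I would then simply specialize this to $y_1 = x$ and $y_2 = g \cdot x$, which gives
\begin{equation*}
    \rmd_\mc Q(g \cdot x,\, g \cdot (g \cdot x)) = \rmd_\mc Q(x,\, g \cdot x),
\end{equation*}
and the associativity of the group action identifies $g \cdot (g \cdot x)$ with $g^2 \cdot x$, completing the proof.

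There is no real obstacle here, as the result is essentially the definition of an isometric action. The only subtle point worth flagging explicitly is the justification that the action is by isometries; I would either cite the standing hypothesis in the footnote or briefly remark that for the symmetry groups of interest (finite subgroups of $\SO(2)$ and $\SO(3)$ acting on $\SE(2)$ or $\SE(3)$ by left multiplication in the rotational component), the bi-invariance of the natural Riemannian metrics on these Lie groups immediately gives the isometry property.
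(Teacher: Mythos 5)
Your proposal is correct and takes essentially the same approach as the paper: both hinge on the action being by isometries and on applying $L_g$ to relate the two pairs of points. The paper merely unpacks the standard fact you invoke (that a Riemannian isometry preserves the distance function) by explicitly mapping a minimizing geodesic from $x$ to $g\cdot x$ forward under $L_g$ and checking, via the inverse map $L_{g\inv}$, that its image is still minimizing.
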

\begin{proof}
    Let $\gamma\hspace{-0.2em}:\hspace{-0.2em}[0,1]\hspace{-0.2em}\to\hspace{-0.2em}\mc Q$ be a minimizing geodesic connecting $x$ to $g\cdot x$.
    Because $G$ acts on $\mc Q$ by isometries, $L_g\of\gamma$ is also a geodesic, of the same length, connecting $g\cdot x$ to $g\cdot(g\cdot x)=g^2\cdot x$.
    Clearly, this must also be minimzing; if there is a shorter geodesic $\tilde\gamma$ connecting $g\cdot x$ to $g^2\cdot x$, then $L_{g\inv}\of\tilde\gamma$ would be a geodesic of the same length connecting $g$ to $g\cdot x$, violating our assumption that $\gamma$ is minimizing.
    Since $L_g\of\gamma$ is minimizing, $\rmd_\mc Q(g\cdot x,g^2\cdot x)=\rmd_\mc Q(x,g\cdot x)$.
\end{proof}
\begin{lemma}
    $\forall x\in\mc Q$, $\forall g,h\in G$ distinct, $\rmd_\mc Q(g\cdot x,h\cdot x)\ge 2r_\mrm{inj}(\mc Q)/\card{G}$.
    \label{lem:same_orbit}
\end{lemma}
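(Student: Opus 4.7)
The plan is to strengthen the claim to $\rmd_\mc Q(x,g\cdot x)\ge 2r_\mrm{inj}(\mc Q)/n$ for every $g\in G\cut\set{e}$ of order $n$; because $n$ divides $\card{G}$ by Lagrange's theorem, this implies the stated bound once one observes that $\rmd_\mc Q(g\cdot x,h\cdot x)=\rmd_\mc Q(x,(g^{-1}h)\cdot x)$ via the isometric action. Setting $d:=\rmd_\mc Q(x,g\cdot x)$, the case $d\ge r_\mrm{inj}(\mc Q)$ is trivial, so I would assume $d<r_\mrm{inj}(\mc Q)$, which makes the minimizing geodesic $\gamma\colon[0,d]\to\mc Q$ from $x$ to $g\cdot x$ unique. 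Applying the preceding equi-spacing lemma to the translates $\gamma_i=g^i\cdot\gamma$ (each of length $d$, connecting $g^i\cdot x$ to $g^{i+1}\cdot x$), their concatenation $\alpha$ would be a piecewise geodesic loop of length $nd$ visiting every point of the cyclic orbit through $x$.

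The remaining task is to argue $nd\ge 2r_\mrm{inj}(\mc Q)$. This is immediate if $\alpha$ is a \emph{smooth} closed geodesic, since every closed geodesic in $\mc Q$ has length at least $2r_\mrm{inj}(\mc Q)$ (the standard argument uses the antipodal point $m=\alpha(nd/2)$: the two halves are geodesics from the basepoint to $m$ with opposite initial velocities, and if $nd/2<r_\mrm{inj}(\mc Q)$ uniqueness in the normal neighborhood forces them to coincide, a contradiction). Smoothness of $\alpha$ at each vertex $g^i\cdot x$ reduces, via $\gamma_i=g^i\cdot\gamma$ and $g^n=e$, to the single junction condition $\dot\gamma(d)=g_*\dot\gamma(0)$. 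To arrange this condition I would instead work at a minimizer $y^*$ of the displacement function $F(y)=\rmd_\mc Q(y,g\cdot y)$: since $F(y^*)\le F(x)=d<r_\mrm{inj}(\mc Q)$, the corresponding geodesic $\gamma^*$ is unique and $F$ is differentiable near $y^*$, so the first variation of arc length $L'(0)=\iprod{g_*v,\dot\gamma^*(d^*)}-\iprod{v,\dot\gamma^*(0)}=0$ forces exactly $\dot\gamma^*(d^*)=g_*\dot\gamma^*(0)$. Then the concatenation $\alpha^*$ is a smooth closed geodesic of length $nd^*$, the classical bound yields $nd^*\ge 2r_\mrm{inj}(\mc Q)$, and combining with $d\ge d^*$ gives the desired $d\ge 2r_\mrm{inj}(\mc Q)/n\ge 2r_\mrm{inj}(\mc Q)/\card{G}$.

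The main obstacle is securing existence of the minimizer $y^*$ when $\mc Q$ is not compact. For the configuration spaces relevant to this paper this is not serious: on $\SO(n)$ it follows from compactness, while on $\SE(n)$ with a left- (or right-) invariant product metric the displacement function $F$ depends only on the rotational part of $y$ and in particular attains its infimum. A subsidiary issue is verifying differentiability of $F$ at $y^*$, but this rests on uniqueness of the minimizing geodesic and therefore reduces to the assumption $d^*<r_\mrm{inj}(\mc Q)$ that is already in force. Everything else---the equi-spacing lemma, the first-variation calculation, and the closed-geodesic length bound---is standard once the minimizer has been pinned down.
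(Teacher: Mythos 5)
Your proposal is correct, but it takes a genuinely different route from the paper. The paper's proof works directly at the given point $x$: it takes the minimizing geodesic from $x$ to $g\cdot x$, extends it along the one-parameter subgroup generated by $g$ (so that it passes through $g^i\cdot x$ at integer times and closes up after $\card{G}$ steps by Lagrange's theorem), and invokes the fact that a geodesic loop has length at least $2r_\mrm{inj}(\mc Q)$. This is short, but it silently relies on the Lie-group structure of the configuration spaces at hand: for a general isometric action there is no reason the extension of the minimizing geodesic past $g\cdot x$ should hit $g^2\cdot x$, since the junction condition $\dot\gamma(1)=g_*\dot\gamma(0)$ need not hold at an arbitrary point of the orbit. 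Your argument confronts exactly this issue by passing to a minimizer $y^*$ of the displacement function $F(y)=\rmd_\mc Q(y,g\cdot y)$, where the first-variation formula \emph{forces} the junction condition and produces a genuine smooth closed geodesic (the classical ``axis'' construction for isometries), after which the closed-geodesic length bound and $d\ge d^*$ finish the job. What you buy is generality and rigor --- the argument works for any free, properly discontinuous isometric action for which the displacement function attains its infimum, and you even get the sharper bound $2r_\mrm{inj}(\mc Q)/\mrm{ord}(g^{-1}h)$ --- at the cost of having to verify existence of $y^*$ (which you correctly dispatch via compactness of $\SO(n)$ and the product structure of $\SE(n)$). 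What the paper's approach buys is brevity, at the cost of being tied to the bi-invariant geometry of the specific spaces considered, where geodesics through the orbit are translates of one-parameter subgroups and the concatenation is automatically a geodesic loop.
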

\begin{proof}
    Without loss of generality, assume $h=e$.
    Consider the one parameter subgroup generated by $g$, and suppose $d(x,g\cdot x)<2r_\mrm{inj}(\mc Q)/\card{G}$.
    Consider the geodesic $\gamma:t\mapsto\exp_x(t\log_x(g))$.
    We know $\ms L(\restr{\gamma}_{[0,1]})<2r_\mrm{inj}(\mc Q)/\card{G}$, so
    \begin{multline*}
        \ms L(\restr{\gamma}_{[0,\card{G}]})=\sum_{i=1}^{\card{G}}\ms L(\restr{\gamma}_{[i-1,i]})\\
        =\card{G}\ms L(\restr{\gamma}_{[0,1]})<2r_\mrm{inj}(\mc Q).
    \end{multline*}
    But $g^{\card{G}}=e$ by Lagrange's Theorem, so we have constructed a geodesic loop.
    The length of a geodesic loop in $\mc Q$ must be at least $2r_\mrm{inj}(\mc Q)$, but $\restr{\gamma}_{[0,\card{G}]}$ is shorter.
    This is a contradiction.
\end{proof}
\begin{proof}[Proof of \Cref{thm:injectivity_radius}]
    Suppose $r_\mrm{inj}(\mc Q/G)<r_\mrm{inj}(\mc Q)/\card{G}$.
    Then there must exist distinct $[x],[y]\in\mc Q/G$ and distinct geodesics $\gamma,\xi$ connecting $[x]$ to $[y]$, both of length strictly less than $r_\mrm{inj}(\mc Q)/\card{G}$.
    (If no such points and geodesics existed, then the injectivity radius would be at least $r_\mrm{inj}(\mc Q)/\card{G}$.)
    We can lift $\gamma$ and $\xi$ to geodesics $\tilde\gamma,\tilde\xi$ of $\mc Q$, such that $\tilde\gamma(0)=\tilde\xi(0)=x\in[x]$, $\tilde\gamma(1)=y\in[y]$, $\tilde\xi(1)=y'\in[y]$, $\pi\of\tilde\gamma=\gamma$, and $\pi\of\tilde\xi=\xi$.
    (We leverage transitivity of the group action to ensure $\tilde\gamma$ and $\tilde\xi$ start at the same point in $\mc Q$ when lifted.)
    We clearly cannot have $y=y'$, or else we would violate the injectivity radius of $\mc Q$.
    Because $y$ and $y'$ are in the same orbit, \Cref{lem:same_orbit} requires $\rmd_\mc Q(y,y')\ge 2r_\mrm{inj}(\mc Q)/\card{G}$.
    But the triangle inequality requires that
    \begin{equation}
        \rmd_\mc Q(y,y')\le\rmd_\mc Q(x,y)+\rmd_\mc Q(x,y')<2r_\mrm{inj}(\mc Q)/\card{G}.
        \label{eq:injectivity_triangle_inequality}
    \end{equation}
    We conclude that $r_\mrm{inj}(\mc Q/G)\ge r_\mrm{inj}(\mc Q)/\card{G}$.
\end{proof}

}

\end{document}